\begin{document}
\title{Representation Learning for Heterogeneous Information Networks via Embedding Events}

\author{Guoji Fu}
\affiliation{
	\institution{Southern University of Science and Technology}
	\city{Shenzhen}
	\country{China}
}
\email{fuguoji1995@gmail.com}

\author{Bo Yuan}
\affiliation{
	\institution{Southern University of Science and Technology}
	\city{Shenzhen}
	\country{China}
}
\email{yuanb@sustc.edu.cn}

\author{Qiqi Duan}
\affiliation{
	\institution{Southern University of Science and Technology}
	\city{Shenzhen}
	\country{China}
}
\email{duanqq257@qq.com}

\author{Xin Yao}
\affiliation{
	\institution{Southern University of Science and Technology}
	\city{Shenzhen}
	\country{China}
}
\email{xiny@sustc.edu.cn}

\begin{abstract}
Network representation learning (NRL) has been widely used to help analyze large-scale networks through mapping original networks into a low-dimensional vector space. However, existing NRL methods ignore the impact of properties of relations on the object relevance in heterogeneous information networks (HINs). To tackle this issue, this paper proposes a new NRL framework, called Event2vec, for HINs to consider both quantities and properties of relations during the representation learning process. Specifically, an event (i.e., a complete semantic unit) is used to represent the relation among multiple objects, and both event-driven first-order and second-order proximities are defined to measure the object relevance according to the quantities and properties of relations. We theoretically prove how event-driven proximities can be preserved in the embedding space by Event2vec, which utilizes event embeddings to facilitate learning the object embeddings. Experimental studies demonstrate the advantages of Event2vec over state-of-the-art algorithms on four real-world datasets and three network analysis tasks (including network reconstruction, link prediction, and node classification).
\end{abstract}

%
%
\begin{CCSXML}
	<ccs2012>
	<concept>
	<concept_id>10010520.10010553.10010562</concept_id>
	<concept_desc>Computer systems organization~Embedded systems</concept_desc>
	<concept_significance>500</concept_significance>
	</concept>
	<concept>
	<concept_id>10010520.10010575.10010755</concept_id>
	<concept_desc>Computer systems organization~Redundancy</concept_desc>
	<concept_significance>300</concept_significance>
	</concept>
	<concept>
	<concept_id>10010520.10010553.10010554</concept_id>
	<concept_desc>Computer systems organization~Robotics</concept_desc>
	<concept_significance>100</concept_significance>
	</concept>
	<concept>
	<concept_id>10003033.10003083.10003095</concept_id>
	<concept_desc>Networks~Network reliability</concept_desc>
	<concept_significance>100</concept_significance>
	</concept>
	</ccs2012>
\end{CCSXML}

\keywords{Heterogeneous information networks, network representation learning, feature learning}

\maketitle

\section{Introduction}
Heterogeneous information networks (HINs), which contain multiple types of objects and links, are ubiquitous in a variety of real-world scenarios such as social networks \cite{jiang2017semi}, bibliographic networks \cite{sun2011pathsim}, and user interest networks \cite{chen2018heterogeneous}. Many real-world HINs are large-scale, e.g., social networks with millions of nodes \cite{ellison2007benefits}. To analyze large-scale networks in many applications, such as link prediction \cite{zhang2013predicting,zhang2015diffusion}, node classification \cite{ji2011ranking,jacob2014learning}, effective network analysis techniques are needed. However, most network analysis methods suffer from high computation and space cost \cite{cai2018comprehensive}. To tackle this problem, a mainstream idea is network representation learning (NRL), which maps original networks into a low-dimensional vector space while preserving as much of the original network information as possible. Using the low-dimensional vector representations of objects as input features, the performance of downstream network analysis can be improved \cite{cai2018comprehensive}. Due to the heterogeneities of both objects and relations, the primary challenge of NRL for HINs is that the representation learning process should effectively capture original network structural and semantic information. To this end, this paper aims to propose an effective NRL framework to learn object embeddings for HINs.

\begin{figure}
	\centering
	\includegraphics[scale=0.25]{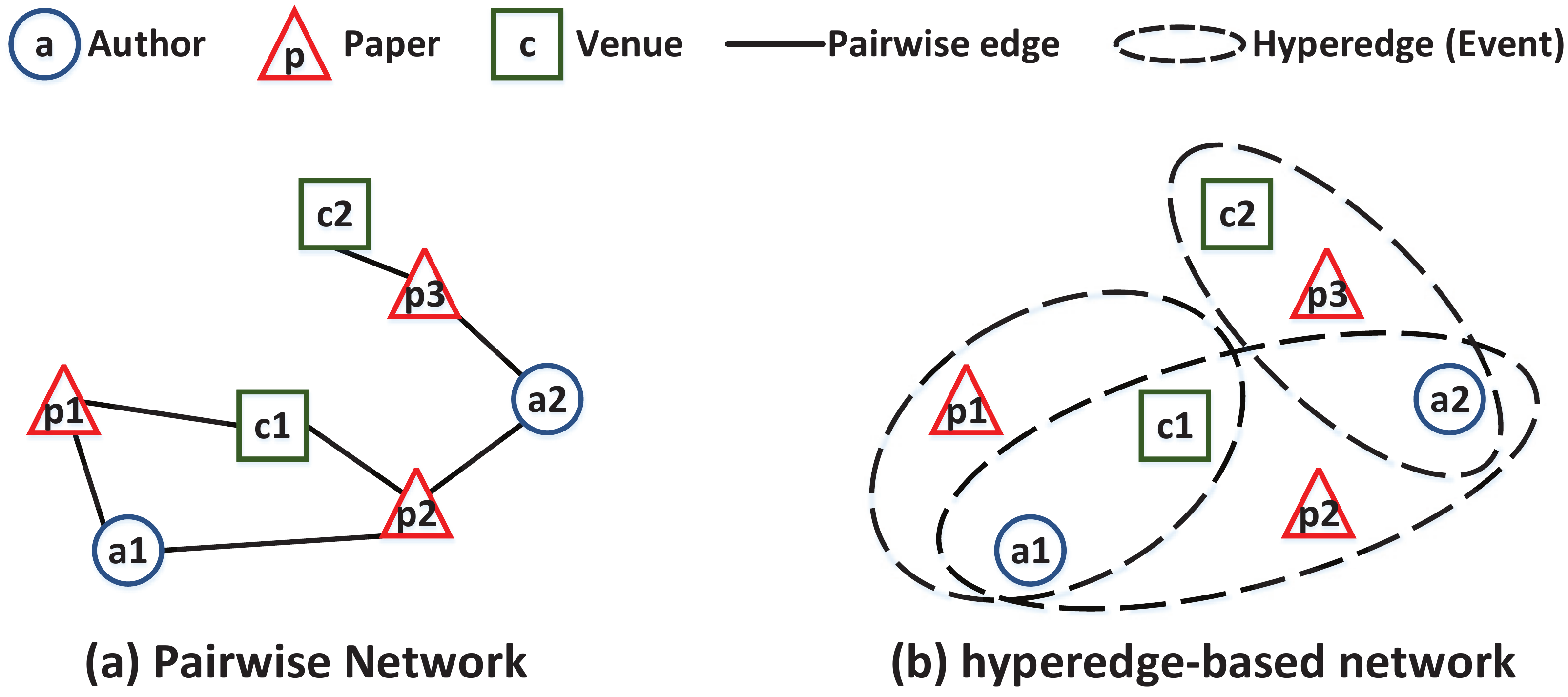}
	\caption{{\small Example of a bibliographic network to illustrate the difference between pairwise and hyperedge-based networks.}}\label{figure1}
\end{figure}

In real-world networks, there may exist some relations among multiple objects. Taking Figure \ref{figure1} as an example, the relation among authors, a paper, and a venue is an indecomposable unit. Decomposing it into pairwise object relations will lose some semantic information \cite{tu2017structural}. Recently, {\itshape hyperedge} was used to represent the relation among multiple objects \cite{gui2017embedding,tu2017structural}, which can be regarded as a complete semantic unit called {\itshape event} \cite{gui2017embedding}. These hyperedge-based methods measure the relation among multiple objects as a whole. However, they consider only the quantities of events and ignore their properties during the representation learning process.

Intuitively, objects involved in same events should be relevant. Meanwhile, objects involved in similar events should be relevant as well. As examples of two bibliographic networks illustrated in Figure \ref{figure2}. In Figure \ref{figure2}(a), author $a_1$ and $a_2$ published two papers together, they are both involved in event $e_1$ and $e_2$. Therefore, they are relevant. Figure \ref{figure2}(b) shows author $a_1$ and $a_2$ published papers with the same topic in the same venue. They are involved in two similar events $e_1$ and $e_2$, respectively. Hence, they should be relevant as well. The properties of events can facilitate capturing the semantic relevance among objects. The relevance among objects in HINs should be driven by both the number of their intersectional events (event-driven first-order proximity) and the similarity between their events (event-driven second-order proximity).

\begin{figure}
	\centering
	\includegraphics[scale=0.26]{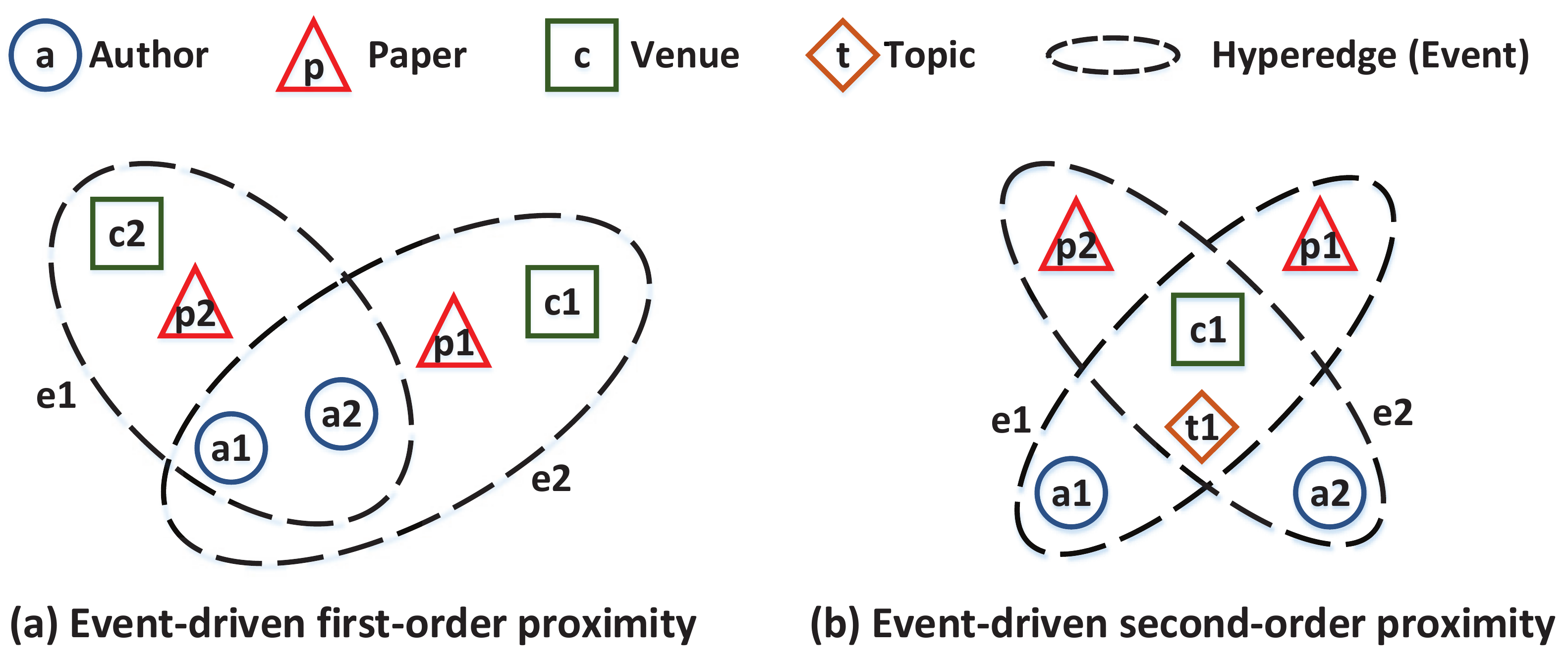}
	\caption{Examples of event-driven proximity of bibliographic networks}\label{figure2}
\end{figure}

In this paper, events are used to represent the relations among objects, and both event-driven first-order and second-order proximities are used to measure the object relevance according to the quantities and properties of relations. We propose a new NRL framework, called Event2vec, to learn the object embeddings of HINs via two learning steps. The first step uses an autoencoder to learn the event embeddings. Based on the event embeddings learned by the previous step, the second step obtains the object embeddings by preserving the event-driven proximities. We theoretically prove the leaning process which utilizes event embeddings to facilitate learning object embeddings is capable to preserve the event-driven proximities in the embedding space.

The contributions of this paper are summarised as follows:

\begin{itemize}
	\item We investigate the significance of properties of relations among multiple objects for learning HIN representations.
	\item We define the event-driven first-order and second-order proximities to measure object relevance driven by quantities and properties of relations, respectively.
	\item We propose a new NRL framework called Event2vec to learn the object embeddings of HINs and theoretically prove Event2vec can preserve the event-driven first-order and second-order proximities in the embedding space.
	\item Experiments on four real-world datasets and three network analysis tasks are conducted to demonstrate the effectiveness of Event2vec.
\end{itemize}

The rest of this paper is organized as follows. Section 2 reviews the related work. In Section 3, we give the definition of the problem. The details of the proposed framework are given in Section 4. Section 5 presents the experimental results. Finally, we conclude in Section 6.

\begin{figure*}
	\centering
	\subfigure[{\small Input Network}]{
		\includegraphics[scale=0.28]{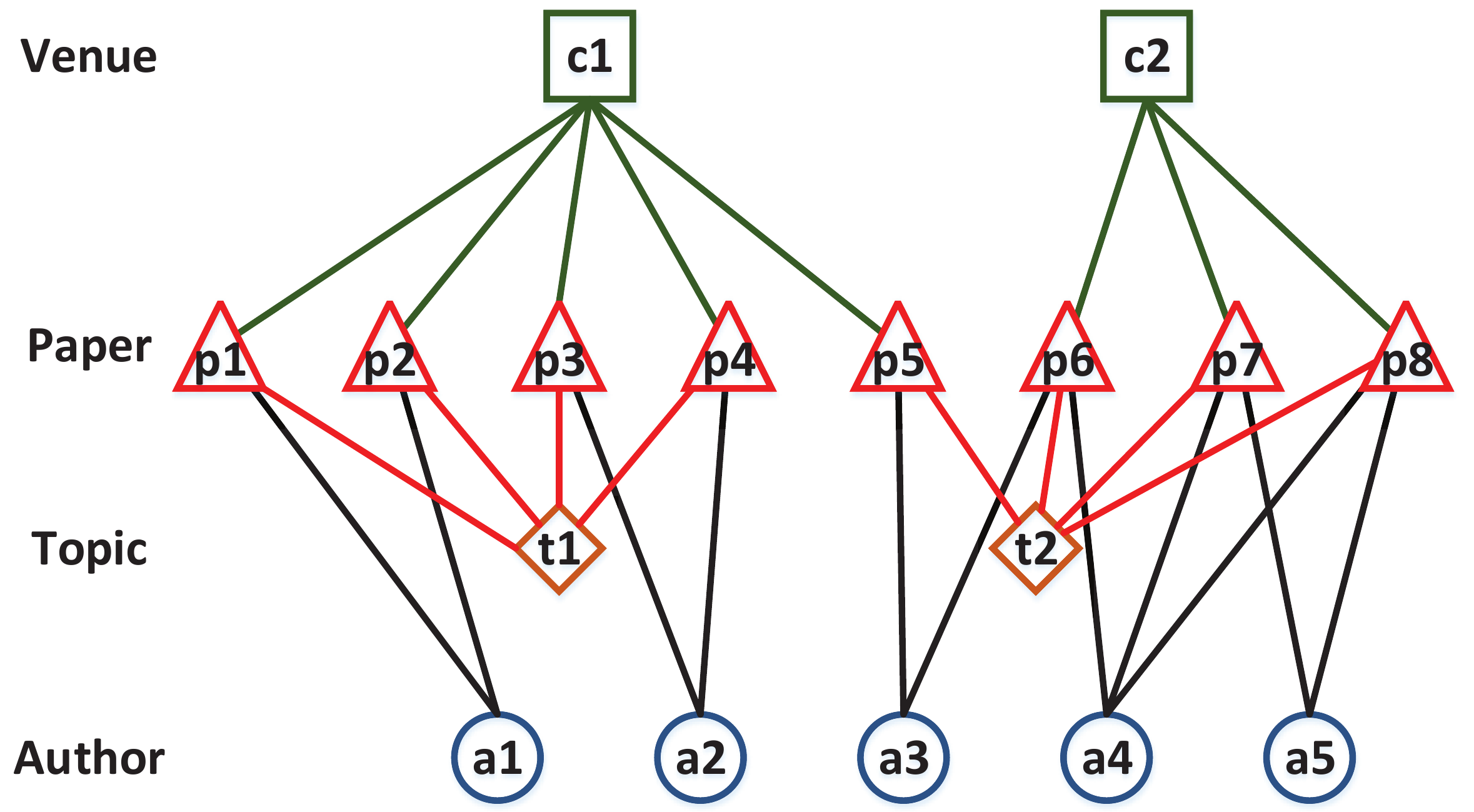}
	}
	\subfigure[{\small Network Representation Learning Results}]{
		\includegraphics[scale=0.35]{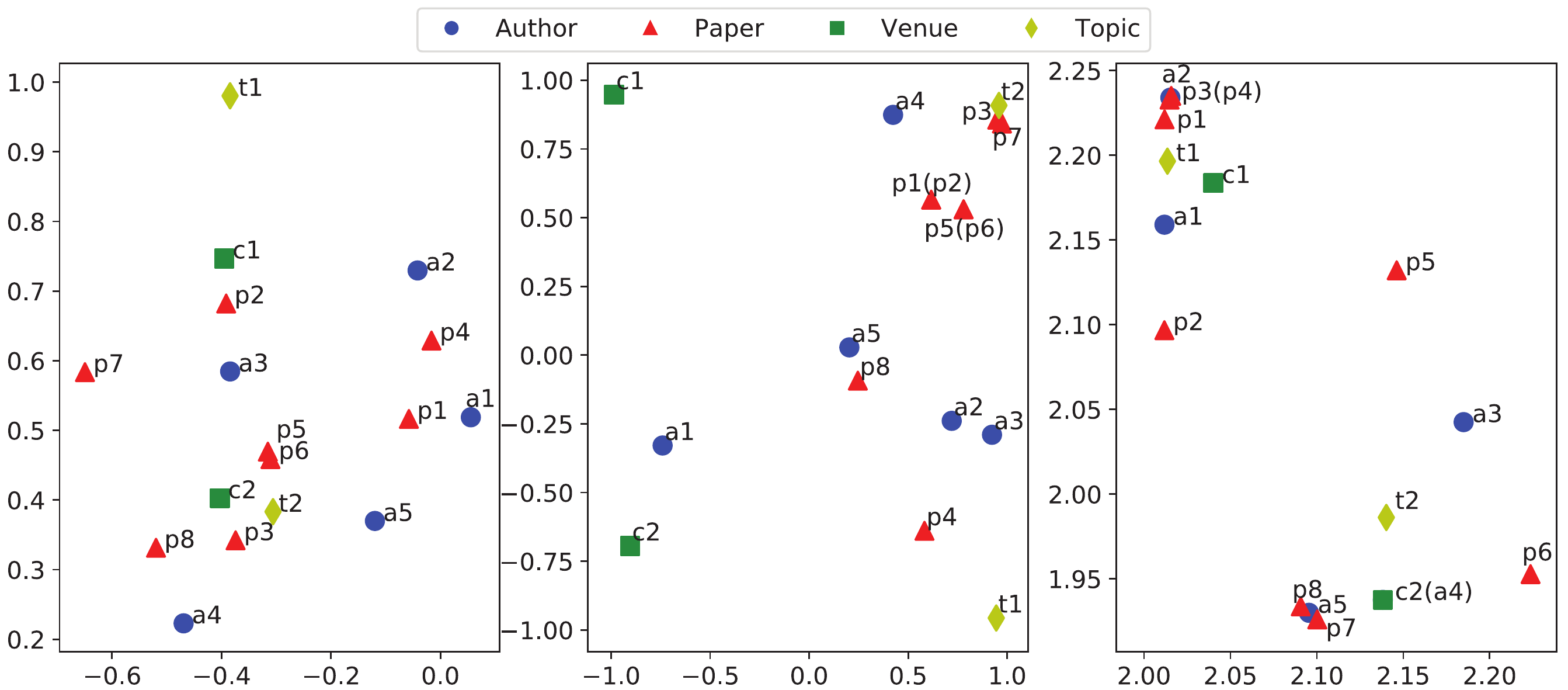}
	}
	\caption{{\small Example of using DeepWalk((b) left), DHNE((b) middle) and Event2vec((b) right) to learn the 2-dimensonal object embeddings. In the embedding space, $a_3$, $a_4$, and $a_5$ should be close since they have cooperations in publishing papers; $a_1$ and $a_2$ published same topic papers in the same venue, they have the semantic relevance and hence should be close as well. Event2vec obtains the best performance in capturing the above object relevance. The properties of relations facilitate capturing the object relevance.}}\label{figure3}
\end{figure*}

\section{Related Work}
The related work is in the area of network representation learning. Early works in NRL community were mainly designed for homogeneous information networks, which contain only a single type of objects and links. However, HINs are more ubiquitous in most complex real-world scenarios. Recently, representation learning for HINs has attracted increasing interest in the NRL community. We review the works in representation learning for homogeneous information networks and heterogeneous information networks in the following tow sub-sections, respectively.

\subsection{Homogeneous Information Network Representation Learning}
Many works have been proposed to learn representations of homogeneous information networks. They can be classified into matrix factorization-based methods, probability-based methods, and deep learning-based methods. Matrix factorization-based methods \cite{roweis2000nonlinear,belkin2002laplacian,ahmed2013distributed} represent relations between pairwise objects in the form of a matrix, e.g., adjacent matrix, Laplacian matrix, and factorize the matrix to obtain object embeddings using eigen-decomposition. Probability-based methods such as DeepWalk \cite{perozzi2014deepwalk} and node2vec \cite{grover2016node2vec} use random walks to sample paths from the network and calculate object co-occurrence probabilities which are used to learn object embeddings via the Skip-gram model \cite{mikolov2013efficient,mikolov2013distributed}. LINE \cite{tang2015line} preserves both first-order and second-order proximities of networks by minimizing the Kullback-Leibler divergence of two joint probability distributions for each pair nodes. Deep learning-based methods \cite{cao2016deep,wang2016structural,kipf2016semi} use adjacent matrix, or object co-occurrence probability matrix, or graph convolution as input, and learn object embeddings via a deep neural network.

Those methods are effective to capture the structural and semantic information of homogeneous information networks. However, they fail to capture the complete semantic information of HINs since they ignore the different semantics of relations among different types of objects.

\subsection{Heterogenous Information Network Representation Learning}
Researchers in NRL field have increasingly engaged in HIN representation learning recently. The success of applying metapath \cite{sun2011pathsim} in HIN analysis has motivated some researchers to carry out metapath-based methods to learn representations of HINs. Metapath2vec \cite{dong2017metapath2vec} and HIN2Vec \cite{fu2017hin2vec} extend the Skip-gram model to learn the embeddings of HINs by employing the metapath-based random walks. HINE \cite{huang2017heterogeneous} optimizes the defined objective function which aims to preserve the metapath-based proximities. However, they only consider the relations between pairwise objects. In order to capture the complete semantics of relations among multiple objects, hyperedges have been used to represent the relations among objects. HEBE \cite{gui2017embedding} preserves the proximites of the objects by modeling the relations among objects as hyperedges. DHNE \cite{tu2017structural} is a hyperedge-based method that preserves both first-order and second-order hypergraph structural information through a semi-supervised neural network model.

The aforementioned methods consider only the number of relations among objects while overlooking the impact of their properties. However, the properties of relations are important for NRL to capture the semantic information of HINs. On the contrary, Event2vec is able to consider both quantities and properties of relations. Given the bibliographic network shown in Figure \ref{figure3}(a), Figure \ref{figure3}(b) shows the results of representation learning using DeepWalk, DHNE, and Event2vec which demonstrate that the properties of relations among multiple objects can facilitate capturing the original network structural and semantic information.

\section{Problem Definition}
In this section, we formally define the problem of representation learning for HINs. Firstly, we give the definition of HIN as presented below.
\smallskip

\newtheorem{mydef}{Definition}
\begin{mydef}
	\textbf{(Heterogeneous Information Network \cite{shi2017survey})}. Given an information network $\mathcal{G}=(V, E, T)$, where $V$ is a set of vertexes, $E$ is a set of links, and $T$ is a set of object types and link types. Let $\varphi(v):V\rightarrow T_V\subset T$ be an object type mapping function and $\psi(r): E\rightarrow T_E\subset T$ be a link mapping function. If $|T_V|+|T_E|>2$, we say that $\mathcal{G}$ is a heterogeneous information network. Note that if $|T_V|+|T_E|=2$, it is degraded to a homogeneous information network.
\end{mydef}

Figure \ref{figure1}(a) gives an example of HINs, i.e., a tiny bibliographic network containing three types of objects ({\itshape author}, {\itshape paper}, and {\itshape venue}).

\begin{mydef}
	\textbf{(Events \cite{gui2017embedding})}. An event $e \in \Omega$ is an indecomposable unit formed by a set of objects, representing the consistent and complete semantic information of relation among multiple objects. $\Omega_i$ denotes the set of events that contain object $v_i$.
\end{mydef}

As shown in Figure \ref{figure2}(a), the relation among $a_1$, $a_2$, $p_1$, and $c_1$ is a complete semantic unit, denoted as an event $e_2$.

\begin{mydef}
	\textbf{(Incident Matrix)}. An incident matrix $\mathcal{H}_{|V|\times|\Omega|}$ is a matrix that shows the relationship between objects $V$ and events $\Omega$ in which each row represents an object and each column represents an event. If object $v_i$ belongs to event $e_j$, then $\mathcal{H}_{i,j}=1$, otherwise $\mathcal{H}_{i,j}=0$. Given an HIN with $|T_V|$ types of objects, there are $|T_V|$ incident matrices $\{\mathcal{H}^t\}_{t=1}^{|T_V|}$ in which each $\mathcal{H}_t$ represents the relationship among $t$-th type of objects and all events.
\end{mydef}

\begin{mydef}
	\textbf{(Event-driven First-order Proximity (EFP))}. The event-driven first-order proximity of object $v_i$ and $v_j$ is defined to be the ratio of the number of their intersectional events and the number of their unioned events: 
	\begin{equation}\label{equation1}
	s_{i,j}^1 = \frac{|\Omega_i\cap\Omega_j|}{|\Omega_i\cup\Omega_j|}.
	\end{equation}
\end{mydef}

In Figure \ref{figure2}(a), $a_1$ and $a_2$ have the EFP since they are contained in two same events $e_1$ and $e_2$, $s^{1}_{1,2}=2/2=1$. EFP considers the relevance among objects driven by the quantities of their relations. Larger $s_{i,j}^1$ of object $v_i$ and object $v_j$ indicates their stronger EFP. As a result, $v_i$ and $v_j$ should be closer in the embedding space.

\begin{mydef}
	\textbf{(Event-driven Second-order Proximity (ESP))}. The event-driven second-order proximity of object $v_i$ and $v_j$ is defined to be the average cosine similarity of their non-intersectional events:
	\begin{equation}\label{equation2}
	\begin{aligned}
	s_{i,j}^2 &= sim(\Omega_i, \Omega_j) \\
	&= \frac{1}{|\Omega_i\cup\Omega_j|}\sum_{e\in\Omega_i,k\in\Omega_j,e\neq k}sim(e, k).
	\end{aligned}
	\end{equation}
	\noindent where $sim(e, k)$ denotes the cosine similarity between $e$ and $k$.
\end{mydef}

In Figure \ref{figure2}(b), $a_1$ and $a_2$ have the ESP since $e_1$ and $e_2$ are similar, $s^{2}_{1,2}=\frac{1}{2}sim(e_1, e_2)$. ESP considers the relevance among objects driven by the properties of relations. Larger $s_{i,j}^2$ of object $v_i$ and object $v_j$ indicates their stronger ESP. Therefore, $v_i$ and $v_j$ should be closer in the embedding space.

\begin{mydef}
	\textbf{(Heterogeneous Information Network Representation Learning)}. Given an HIN $\mathcal{G}=(V, E, T)$, HIN representation learning aims to learn a mapping function $f:V \rightarrow Z \in \mathbb{R}^d$ where $d \ll |V|$, and preserve both event-driven first-order and second-order proximities of objects in the embedding space $\mathbb{R}^d$.
\end{mydef}

\section{Event2vec}
In this section, we introduce the proposed Event2vec to learn the object embeddings of HINs. As shown in Figure \ref{figure4}, after generating events from the input pairwise-based HIN, the representation learning process of Event2vec consists of two steps. The first step tries to learn the event embeddings. The second step obtains the object embeddings based on the learned event embeddings.

\begin{figure*}
	\centering
	\includegraphics[scale=0.55]{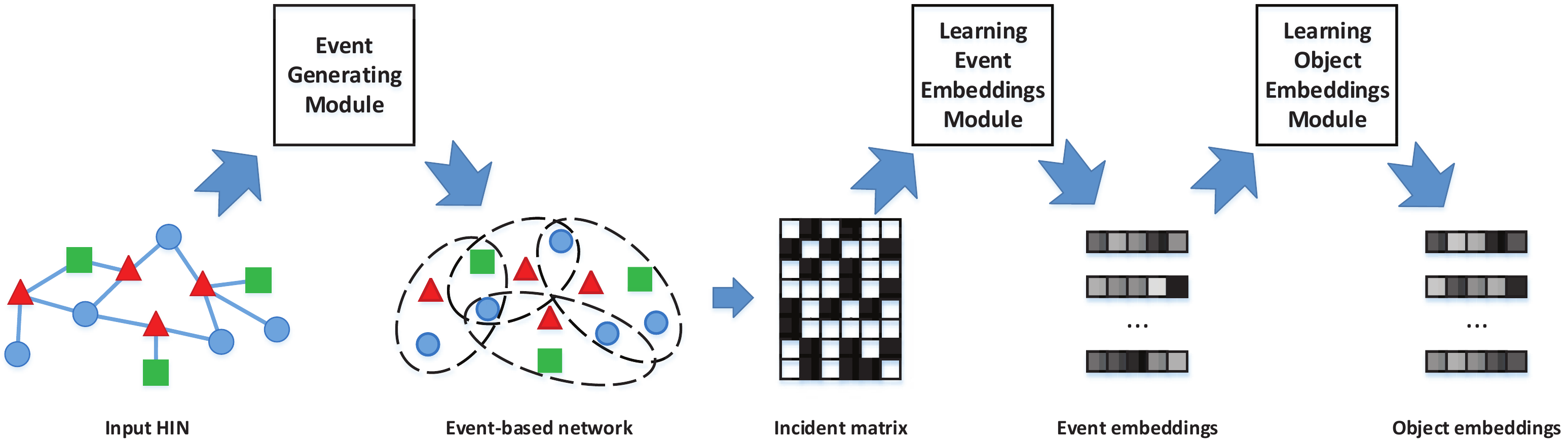}
	\caption{{\small The framework of Event2vec.}}\label{figure4}
\end{figure*}

\subsection{Event Generating}
In this section, we introduce the event generating algorithm. We first define the event identifier $q$ for each link $r$ of the original HIN by defining the mapping function $\Theta(r): r\rightarrow q$. Then the links with same event identifier are merged into an event $e$.

The event identifier $q$ for event $e$ can be defined as a sub-set objects of $e$. The choices of those objects are based on the characteristics of the network in question. For example, for a bibliographic network that contains three types of objects ({\itshape author}, {\itshape paper}, and {\itshape venue}), an incident that authors published a paper in a venue is an indecomposable semantic unit. Therefore, the relation among authors, a paper, and a venue should be regarded as an event. Then the event identifier can be defined as every paper since one event just corresponds to one paper.

Once the event identifiers are defined, events can be generated using Algorithm \ref{algorithm1}.

\begin{algorithm}
	\DontPrintSemicolon
	\KwIn{HIN $\mathcal{G} = (V, E, T)$, Event identifier mapping function $\Theta$}
	\KwOut{Events set $\Omega$}
	\Begin{
		$\Omega \longleftarrow \emptyset$;\\
		\For{$r \in E$}{
			$q \longleftarrow \Theta(r)$;\\
			
			\For{$e \in \Omega$}{
				\If{$\Theta(e) = q$}{
					$e \longleftarrow e \cup r$;\\
					update $\Omega$ using $e$;\\
					break;
				}
			}
			\If{$\Omega$ is $\emptyset$ or $\Theta(e) \neq q, \forall e\in\Omega$}{
				$\Omega \longleftarrow \Omega \cup r$;
			}
		}
	}
	\caption{Event Generating}\label{algorithm1}
\end{algorithm}

\subsection{Learning Event Embeddings}
We use an autoencoder model to learn the event embeddings. It is worth to note that other methods such as singular value decomposition \cite{golub1970singular}, stacked denoising autoencoders \cite{vincent2010stacked}, variational autoencoder \cite{doersch2016tutorial}, can also be adapted to learn the event embeddings. 

\begin{figure}
	\centering
	\includegraphics[scale=0.35]{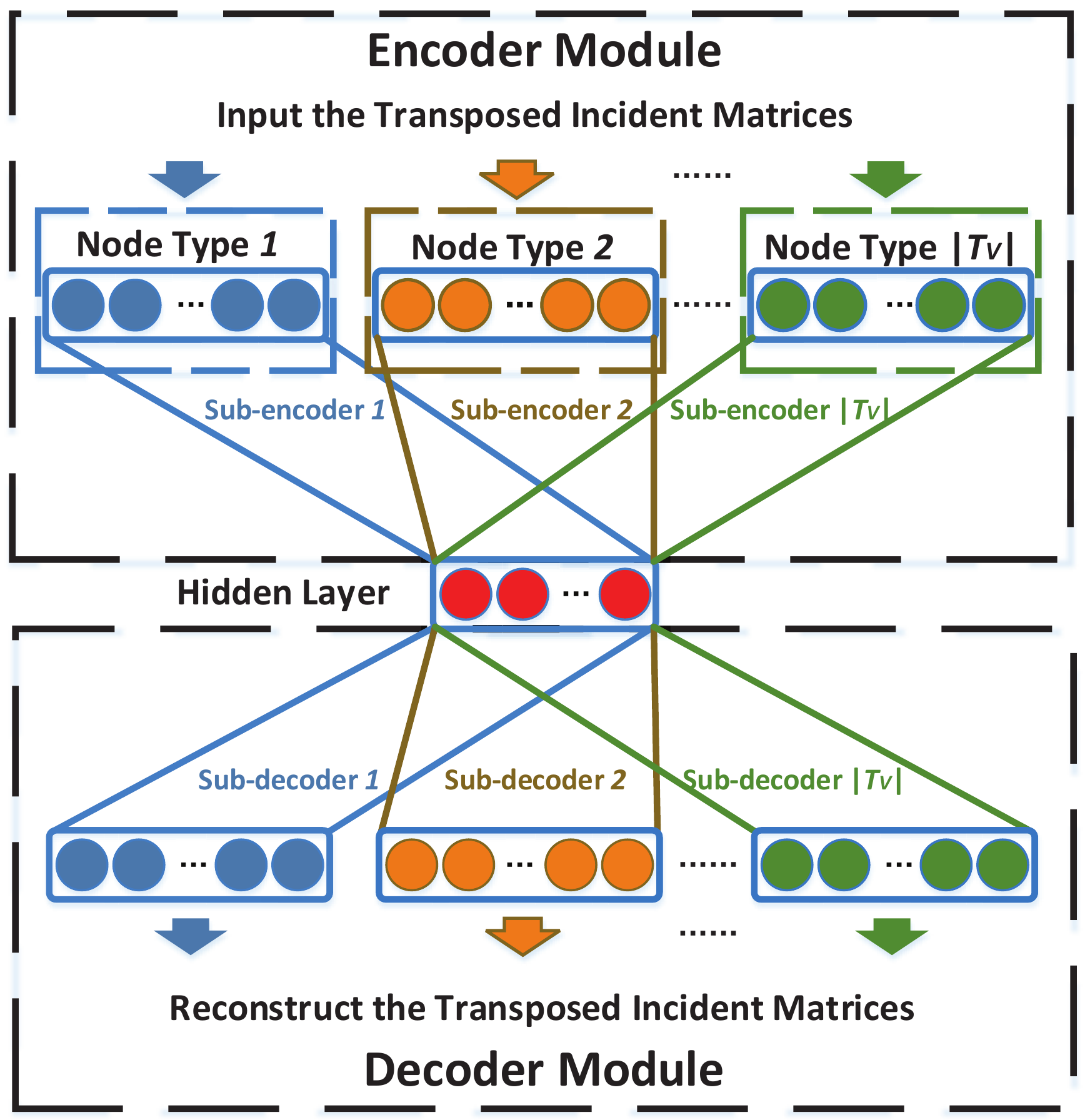}
	\caption{{\small The framework of autoencoder.}}\label{figure5}
\end{figure}

As shown in Figure \ref{figure5}, different types of objects require learning different mapping functions to map events into the same embedding space. Taking the incident matrices $\{\mathcal{H}^t\}_{t=1}^{|T_V|}$ as inputs, we design a single hidden layer autoencoder model with $|T_V|$ sub-encoder components and $|T_V|$ sub-decoder components. Each component is designed for one type of object. The size of the input layer of each sub-encoder component is equal to the number of objects with the same type. The same scheme works for the output layer of each sub-decoder component.

The encoder module maps the incident matrices $\{\mathcal{H}^t\}_{t=1}^{|T_V|}$ from all sub-encoder components to the same embedding space. Then, the embedding of each event $\textbf{Z}_i$ is obtained in the hidden layer by summing up the event embeddings $\{\textbf{Z}_i^t\}_{t=1}^{|T_V|}$ generated from sub-encoder components:
\begin{equation}\label{equation3}
\textbf{Z}_i^t = \sigma(\textbf{W}_t*\mathcal{H}_i^t + \textbf{b}_t),
\end{equation}

\begin{equation}\label{equation4}
\textbf{Z}_i = \sum_{t=1}^{|T_V|}\textbf{Z}_i^t,
\end{equation}

\begin{equation}\label{equation5}
\hat{\mathcal{H}_i^t} = \sigma(\hat{\textbf{W}}_t*\textbf{Z}_i + \hat{\textbf{b}}_t),
\end{equation}

\noindent where $t$ is the index for object types, $\sigma$ is the sigmoid function, $\textbf{W}$ and $\textbf{b}$ are weights and bias of encoder modules and $\hat{\textbf{W}}$ and $\hat{\textbf{b}}$ are weights and bias of decoder modules.

\smallskip
The training objective of the autoencoder is to minimize the reconstruction error between inputs and outputs:

\begin{equation}\label{equation6}
\mathcal{L}_{rec} = \sum_{i=1}^{|\Omega|}\sum_{t=1}^{|T_V|}{||\mathcal{H}_i^t-\hat{\mathcal{H}}_i^t||_2^2}
\end{equation}

We may not observe all links of the HINs. Therefore, it is more meaningful for the autoencoder to reconstruct the non-zero elements of incident matrices correctly. Hence, we impose more penalty to the reconstruction error of the non-zero elements, following the suggestions from the previous paper \cite{wang2016structural}. The reconstruction error function is revised as below.

\begin{equation}\label{equation7}
\begin{aligned}
\mathcal{L}_{rec}^* &= \sum_{i=1}^{|\Omega|}\sum_{t=1}^{|T_V|}{||(\mathcal{H}_i^t-\hat{\mathcal{H}}_i^t)\odot\Lambda_i^t||_2^2}\\
&=\sum_{t=1}^{|T_V|}{||(\mathcal{H}^t-\hat{\mathcal{H}}^t)\odot\Lambda^t||_2^2},
\end{aligned}
\end{equation}

\noindent where $\odot$ is the Hadamard product, $\Lambda_i^t = \{\lambda_{i,j}^t\}_{j=1}^{|V^t|}$. We define\\
\[ \lambda_{i,j}^t =
\begin{cases}
1, & \mbox{if } \mathcal{H}_{ij}^t = 0,\\
\beta, & \mbox{otherwise}.
\end{cases}
\]

\noindent with $\beta>1$.

\smallskip
To avoid overfitting, we add the $\mathcal{L}2$-norm regularizer term to the loss function $\mathcal{L}_{reg}$ where \\
\begin{equation}\label{equation8}
\mathcal{L}_{reg} = \sum_{t=1}^{|T_V|} (||\textbf{W}_t||_2^2 + ||\hat{\textbf{W}}_t||_2^2).
\end{equation}

\noindent Then the final objective function is shown as follows:

\begin{equation}\label{equation9}
\mathcal{L} = \mathcal{L}_{rec}^* + \alpha\mathcal{L}_{reg},
\end{equation}

\noindent where $\alpha \ge 0$ is the penalty rate.
\smallskip

The stochastic gradient descent algorithm is used to train the proposed autoencoder. The partial derivatives of $\mathcal{L}$ with respect to $\textbf{W}$ and $\hat{\textbf{W}}$ can be calculated as follows:

\begin{equation}\label{equation10}
\begin{aligned}
\frac{\partial \mathcal{L}}{\hat{\textbf{W}}_t} &= \frac{\partial \mathcal{L}_{rec}^*}{\partial \hat{\textbf{W}}_t} + \alpha\frac{\partial \mathcal{L}_{reg}}{\partial \hat{\textbf{W}}_t}, \\
\frac{\partial \mathcal{L}}{\textbf{W}_t} &= \frac{\partial \mathcal{L}_{rec}^*}{\partial \textbf{W}_t} + \alpha\frac{\partial \mathcal{L}_{reg}}{\partial \textbf{W}_t},
\end{aligned}
\end{equation}
\smallskip

The above derivatives of Eq.\ref{equation10} are obtained iteratively using the back-propagation algorithm (BP) \cite{lecun2015deep} during the training process.
\smallskip

After training the model, the event embeddings can be obtained in the hidden layer by repeating the encoder process. Events that contain many identical objects will obtain similar embeddings using autoencoder.

\subsection{Learning Object Embeddings}
The object embeddings are obtained based on the event embeddings learned from the above sub-section. As previously discussed, large $s_{i,j}^1$ and $s_{i,j}^2$ both deduce object $v_i$ and $v_j$ are relevant. To capture the relevance of objects, we need to preserve both their EFP and ESP. Based on this motivation, we obtain the object embedding $y_i$ by taking the average of event embeddings $\{z_e\}, e\in \Omega_i$ as below.

\begin{equation}\label{equation11}
y_i = \frac{1}{|\Omega_i|}\sum_{e\in\Omega_i}z_e.
\end{equation}

\noindent Rewriting the above equation into matrix form as below, which we obtain $\textbf{Y}^t$,

\begin{equation}\label{equation12}
\textbf{Y}^t = (\textbf{D}_v^t)^{-1} \mathcal{H}^t \textbf{Z}^t,
\end{equation}

\noindent where $\textbf{D}_v^t$ is the diagonal matrix that contains the $t$-th type of object degrees.
\smallskip

Through the above learning process, the learned object embeddings satisfy the following lemma,
\newtheorem{myprop}{Lemma}
\begin{myprop}
	The similarity between embedding $y_i$ of object $v_i$ and embedding $y_j$ of $v_j$ is proportional to their event-driven first-order proximity $s_{i,j}^1$ and second-order proximity $s_{i,j}^2$.
\end{myprop}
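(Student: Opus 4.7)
The plan is to compute an inner-product similarity between $y_i$ and $y_j$, expand it into a double sum over events in $\Omega_i$ and $\Omega_j$, split that sum according to whether the two events coincide, and then match the two resulting pieces to the defining formulas \eqref{equation1} and \eqref{equation2} for $s_{i,j}^1$ and $s_{i,j}^2$.

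First I would write, using \eqref{equation11},
\begin{equation*}
\langle y_i, y_j \rangle \;=\; \frac{1}{|\Omega_i|\,|\Omega_j|} \sum_{e \in \Omega_i} \sum_{k \in \Omega_j} \langle z_e, z_k \rangle,
\end{equation*}
and partition the inner double sum into the diagonal part ($e=k$, which by construction forces $e \in \Omega_i \cap \Omega_j$) and the off-diagonal part ($e \neq k$). Under the mild assumption that the event embeddings produced by the autoencoder in Section 4.2 have comparable norms $\|z_e\| \approx c$ (justified by the bounded sigmoid output \eqref{equation3} and the uniform weighting across events in the reconstruction objective \eqref{equation7}), the diagonal part reduces to $c^2\,|\Omega_i \cap \Omega_j|$, while each off-diagonal inner product becomes $c^2\, sim(e,k)$, exactly the quantity appearing in the numerator of \eqref{equation2}.

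Second, I would factor a common $|\Omega_i \cup \Omega_j|$ out of the two pieces and apply the definitions \eqref{equation1} and \eqref{equation2} to rewrite the total as
\begin{equation*}
\langle y_i, y_j \rangle \;=\; \frac{c^2\,|\Omega_i \cup \Omega_j|}{|\Omega_i|\,|\Omega_j|}\,\bigl(s_{i,j}^1 + s_{i,j}^2\bigr).
\end{equation*}
Since the prefactor is positive and does not depend on the learned embeddings, the inner-product similarity of $y_i$ and $y_j$ (and, after a further normalization by $\|y_i\|\|y_j\|$, their cosine similarity) is a strictly increasing function of both $s_{i,j}^1$ and $s_{i,j}^2$, which is precisely the proportionality asserted by the lemma.

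The main obstacle is the step that turns $\langle z_e, z_e \rangle$ into a constant and $\langle z_e, z_k \rangle$ into $c^2\, sim(e,k)$: without some control on $\|z_e\|$, the decomposition entangles event norms with the cardinalities $|\Omega_i \cap \Omega_j|$ and the similarity sum, and the identification with \eqref{equation1} and \eqref{equation2} is no longer clean. I would handle this by making explicit a uniform-norm assumption on the event embeddings, either by appealing to the sigmoid activation that bounds each coordinate in $(0,1)$ and to the uniform reconstruction loss that pushes embeddings toward a common scale, or by inserting an $\ell_2$-normalization of $z_e$ as a preprocessing step before applying \eqref{equation11}. With that caveat stated, the two-line expansion above delivers the lemma.
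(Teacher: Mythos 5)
Your proposal follows essentially the same route as the paper's proof: expand the similarity of the averaged embeddings into a double sum over event pairs, split it into the diagonal ($e=k$) and off-diagonal ($e\neq k$) parts, and identify the two pieces with $s_{i,j}^1$ and $s_{i,j}^2$ after extracting a factor of $1/|\Omega_i\cup\Omega_j|$. The only difference is that you make explicit---via the uniform-norm assumption on the $z_e$ and the written-out prefactor $c^2|\Omega_i\cup\Omega_j|/(|\Omega_i|\,|\Omega_j|)$---the pair-dependent constants and the identification of $sim(z_e,z_k)$ with $sim(e,k)$ that the paper silently absorbs into its two ``$\propto$'' steps, which is if anything more careful than the original.
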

\begin{proof}
	Given object $v_i$ and $v_j$, the cosine similarity between their embeddings $y_i$ and $y_j$ is shown as below.
	\begin{equation}\label{equation13}
	\begin{aligned}
	sim(y_i, y_j) = {} & sim(\frac{1}{|\Omega_i|}\sum_{e\in\Omega_i}z_e,  \frac{1}{|\Omega_j|}\sum_{k\in\Omega_j}z_k) \\
	\propto {} & \sum_{e\in\Omega_i,k\in\Omega_j}sim(z_e, z_k) \\
	\propto {} & \frac{1}{|\Omega_i\cup\Omega_j|}(\sum_{e\in\Omega_i,k\in\Omega_j,e=k}sim(z_e, z_k) \\ 
	& + \sum_{e\in\Omega_i,k\in\Omega_j,e\neq k}sim(z_e, z_k))  \\
	= {} & \frac{|\Omega_i\cap\Omega_j|}{|\Omega_i\cup\Omega_j|} + sim(\Omega_i, \Omega_j) \\
	= {} & s_{i,j}^1 + s_{i,j}^2.
	\end{aligned}
	\end{equation}
\end{proof}
Hence, objects that have many intersectional events and$/$or similar events will obtain similar embeddings. Specifically, the larger $s_{i,j}^1$ and$/$or $s_{i,j}^2$ of objects $v_i$ and $v_j$, the more similar embeddings they have. Therefore, Event2vec is able to preserve both EFP and ESP of objects in the embedding space.
\smallskip

Finally, we present the Event2vec in Algorithm \ref{algorithm2}.

\begin{algorithm}
	\DontPrintSemicolon
	\KwIn{The incident matrix $\mathcal{H}$, the object degrees diagonal matrix $\{D_v^t\}_{t=1}^{|T_V|}$ and the parameter $\alpha$}
	\KwOut{Event embeddings $\textbf{Z}$, object embeddings $\textbf{Y}$}
	\Begin{
		Initialize $\{\textbf{W}\}_{t=1}^{|T_V|}$, $\{\hat{\textbf{W}}\}_{t=1}^{|T_V|}$, $\{\textbf{b}\}_{t=1}^{|T_V|}$, $\{\hat{\textbf{b}}\}_{t=1}^{|T_V|}$;\\
		\Repeat{the terminating condition is met}{
			Calculate $\textbf{Z}$ by Eq.\ref{equation3} and Eq.\ref{equation4};\\
			Calculate $\hat{\mathcal{H}}$ by Eq.\ref{equation5};\\
			Calculate objective function cost by Eq.\ref{equation9};\\
			Update $\{\textbf{W}\}_{t=1}^{|T_V|}$, $\{\hat{\textbf{W}}\}_{t=1}^{|T_V|}$, $\{\textbf{b}\}_{t=1}^{|T_V|}$, $\{\hat{\textbf{b}}\}_{t=1}^{|T_V|}$ using BP based on Eq.\ref{equation10};\\
		}
		Calculate $\textbf{Y}$ by Eq.\ref{equation12};
	}
	\caption{Event2vec}\label{algorithm2}
\end{algorithm}

\begin{table*}
	\centering
	\caption{{\small Description of four datasets.}}\label{table1}
	\begin{tabular}{|c|c|c|c|c|c|c|c|c|c|}
		\toprule
		Datasets & \multicolumn{4}{|c|}{Object type} & \multicolumn{4}{|c|}{\#(V)} & \#(E) \\
		\midrule
		DBLP & author & paper & venue & term & 14475 & 14376 & 20 & 8920 & 170794 \\
		Douban & user & movie & actor & director & 3022 & 6977 & 3004 & 789 & 214392 \\
		IMDB & user & movie & actor & director & 943 & 1360 & 42275 & 918 & 136093\\
		Yelp & user & business & location & category & 14085 & 14037 & 62 & 575 & 247698 \\
		\bottomrule
	\end{tabular}
\end{table*}

\subsection{Complexity Analysis}\label{complexity}
The computational complexity of event generating is $O(|E||\Omega|)$, where $|E|$ is the number of links in HIN and $|\Omega|$ is the number of generated events. The computational complexity of training autoencoder is $O(|V|dbI)$, where $|V|$ is the number of objects in HIN, $d$ is the representation size, $b$ is the batch size and $I$ is the number of iterations. The computational complexity of generating object embeddings is $O(|V||\Omega|d)$. Then the total computational complexity of Event2vec is $O(|E||\Omega|)+O(|V|dbI)+(|V||\Omega|d)$.

\section{Experiments}
This section reports experimental results of Event2vec. We use four real-world datasets to evaluate our method on three network analysis tasks including network reconstruction, link prediction, and node classification. Specifically, network reconstruction task is used to evaluate the performance of NRL methods for preserving structural information and link prediction and node classification are used to evaluate the performance of simultaneously preserving both the original network structural and semantic information. The source code of Event2vec is available at {\itshape \url{https://github.com/fuguoji/event2vec}}.

\subsection{Datasets}
We evaluate our method on four real-world datasets, including DBLP \cite{sun2011pathsim}, Douban \cite{zheng2017recommendation}, IMDB\footnote{http://komarix.org/ac/ds/} and Yelp\footnote{https://www.yelp.com/}. The brief information of each dataset is shown as follows.

\begin{itemize}
	\item \textbf{DBLP}: DBLP is a bibliographic network in computer science collected from four research areas: database, data mining, machine learning, and information retrieval. In the dataset, 4057 authors, 20 venues and 100 papers are labeled with one of the four research areas.
	\item \textbf{Douban}: Douban was collected from a user review website Douban in China. We extracted a sub-network containing four types of objects for our experiments.
	\item \textbf{IMDB}: IMDB is a link dataset collected from the Internet Movie Data. The network used in the experiment contains four types of objects. In the dataset, 1357 movies are labeled with at least one of the 23 labels.
	\item \textbf{Yelp}: The dataset was extracted from a user review website in America, Yelp, containing four types of objects.
\end{itemize}

All four datasets are used in network reconstruction and link prediction tasks, but only DBLP and IMDB are used in the node classification task since only these two datasets provide the ground truth of object labels. The detailed statistics of datasets are shown in Table \ref{table1}.

\subsection{Baseline Algorithms}
We compare our method with five state-of-the-art methods, including DeepWalk \cite{perozzi2014deepwalk}, node2vec \cite{grover2016node2vec}, LINE \cite{tang2015line}, metapath2vec\cite{dong2017metapath2vec}, and DHNE \cite{tu2017structural}. The first three are homogeneous NRL methods, they are widely used in learning representations of homogeneous information networks. Metapath2vec is a metapath-based method designed for learning representations of HINs. DHNE is a recent NRL method using hyperedges to model the relations among multiple objects.

\subsection{Parameter Settings}
In the experiments, the representation size is uniformly set as 64 for all methods. As same as the setting in the previous paper (\cite{tu2017structural}), for Deepwalk and node2vec, we set the window size, walk length and the number of walks on each vertex as 10, 40, and 10, respectively. For LINE, the number of negative samples is set as 5 and the learning rate as 0.025. For metapath2vec, we follow the suggestions from the papers \cite{dong2017metapath2vec,zheng2017recommendation}, the metapaths we chose for DBLP are "APA" and "APCPA", for Douban and IMBD are "MUM", "MAM" and "MDM", for Yelp are "BUB", "BLB" and "BCB". For DHNE, following the setting on paper \cite{tu2017structural} we use one-layer full connection layer to learn tuplewise similarity function, the size of hidden layer is set as 64 and the size of fully connection layer is set as the sum of the embedding length from all types, 256. The parameter $\alpha$ in DHNE is tuned by grid search from \{0.01, 0.1, 1, 2, 5, 10\} and the learning rate is set as 0.025.

For the Event2vec, we use one hidden layer autoencoder for all experiments, the size of the hidden layer is 64 which equals the representation size. The parameter $\beta$ is set as 30, 100, 2, 80 for DBLP, Douban, IMDB, and Yelp respectively. The learning rate is set as 0.025 for all experiments. The event identifier for DBLP, Douban, IMDB, and Yelp are set as paper, movie, movie, and business, respectively.

\subsection{Network Reconstruction}
The network reconstruction task can be used to evaluate the performance of preserving original network structural information. In this section, the embeddings of objects obtained by the embedding methods are used to predict the links of the original networks, specifically, we predict the links among objects based on cosine similarity of their embeddings. The evaluation metrics used in network reconstruction task is AUC \cite{fawcett2006introduction}. We independently run each experiment ten times and present the average performance of network reconstruction on all four datasets in Table \ref{table2}.  The standard deviation is less than 0.015 for all experiments.

\begin{table}
	\centering
	\caption{{\small AUC of network reconstruction.}}\label{table2}
	\begin{tabular}{|c|c|c|c|c|}
		\toprule
		Datasets & DBLP & Douban & IMDB & Yelp \\
		\midrule
		Event2vec & \textbf{0.982} & 0.872 & \textbf{0.987} & 0.924 \\
		DeepWalk & 0.964 & \textbf{0.930} & 0.975 & \textbf{0.973} \\
		node2vec & 0.946 & 0.912 & 0.929 & 0.955 \\
		LINE & 0.535 & 0.644 & 0.564 & 0.514 \\
		metapath2vec & 0.884 & 0.891 & 0.846 & 0.738 \\
		DHNE & 0.503 & 0.714 & 0.739 & 0.549 \\
		\bottomrule
	\end{tabular}
\end{table}

We have the following observations from the results:
\begin{itemize}
	\item Event2vec obtains the best performance on DBLP and IMDB. The AUC values of Douban and Yelp obtained by Event2vec are 0.872 and 0.973, respectively. Overall, the results demonstrate that Event2vec can preserve the original network structure information effectively.
	\item DHNE and Event2vec are both hyperedge-based NRL methods. However, Event2vec outperforms DHNE on all four datasets which demonstrate Event2vec is more effective to preserve the pair-wise based structural information of HINs via preserving the EFP and ESP.
	\item Deepwalk and node2vec obtain good performance on all four datasets and perform better than Event2vec on Douban and Yelp. They can preserve the structural information of HINs effectively. However, they do not excel at preserving the semantic information, as shown in both the below link prediction and node classification tasks.
\end{itemize}

\subsection{Link Prediction}
Link prediction can be used to evaluate the performance of NRL algorithms on capturing the implicit relevance of objects. The better performance on link prediction, the more effective NRL algorithm is. We present the link prediction task on object embeddings obtained on four datasets by all NRL algorithms. Specifically, we predict the links among objects based on the cosine similarity of their embeddings. The evaluation metrics used in this task is AUC.

We randomly split the edges of the original HIN for training and testing. The training set contains 80\% edges of the original network and the testing set contains the left 20\%. Each experiment is independently run 10 times and the average performances on the testing set are shown in Table \ref{table3}. The standard deviation is less than 0.015 for all experiments.

\begin{table}
	\centering
	\caption{{\small AUC of link prediction.}}\label{table3}
	\begin{tabular}{|c|c|c|c|c|}
		\toprule
		Methods & DBLP & Douban & IMDB & Yelp \\
		\midrule
		Event2vec & \textbf{0.901} & \textbf{0.823} & 0.894 & \textbf{0.862} \\
		DeepWalk & 0.794 & 0.677 & 0.839 & 0.841 \\
		node2vec & 0.709 & 0.618 & 0.652 & 0.783 \\
		LINE & 0.697 & 0.710 & 0.748 & 0.531 \\
		metapath2vec & 0.551 & 0.589 & \textbf{0.909} & 0.616 \\
		DHNE & 0.632 & 0.761 & 0.811 & 0.546 \\
		\bottomrule
	\end{tabular}
\end{table}

From the results, we have the following observations:
\begin{itemize}
	\item Event2vec significantly outperforms all baselines on DBLP, Douban, and Yelp, and obtains competitive performance against metapath2vec on IMDB. Event2vec is effective to capture the implicit object relevance.
	\item Event2vec performs better than DeepWalk and node2vec for link prediction on all datasets. The previous task, network reconstruction, has shown DeepWalk and node2vec can work effectively for preserving the structural information of HINs, while the results of link prediction demonstrate that their weakness for preserving the semantic information. As a contrary, it shows that Event2vec is able to effectively preserve both the structural and semantic information.
	\item DHNE and Event2vec both consider the relationships among multiple objects as a whole. The difference between them is that Event2vec considers both the quantities and properties of relations during the representation learning process, while DHNE considers only the former. However, Event2vec significantly outperforms DHNE on all four datasets. It demonstrates the properties of relations can facilitate capturing the implicit semantic information.
\end{itemize}

Furthermore, we repeat the link prediction task on DBLP with different sparsity. The ratio of remaining edges are varied from 10\% to 90\% with a step 10\%, and the rest is used to form the testing set. Each experiment is independently conducted ten times and the average performance is shown in Figure \ref{figure6}. We can observe that Event2vec obtains the best performance on all sparse networks. Event2vec can work effectively on sparse networks.

\begin{figure}
	\centering
	\includegraphics[scale=0.35]{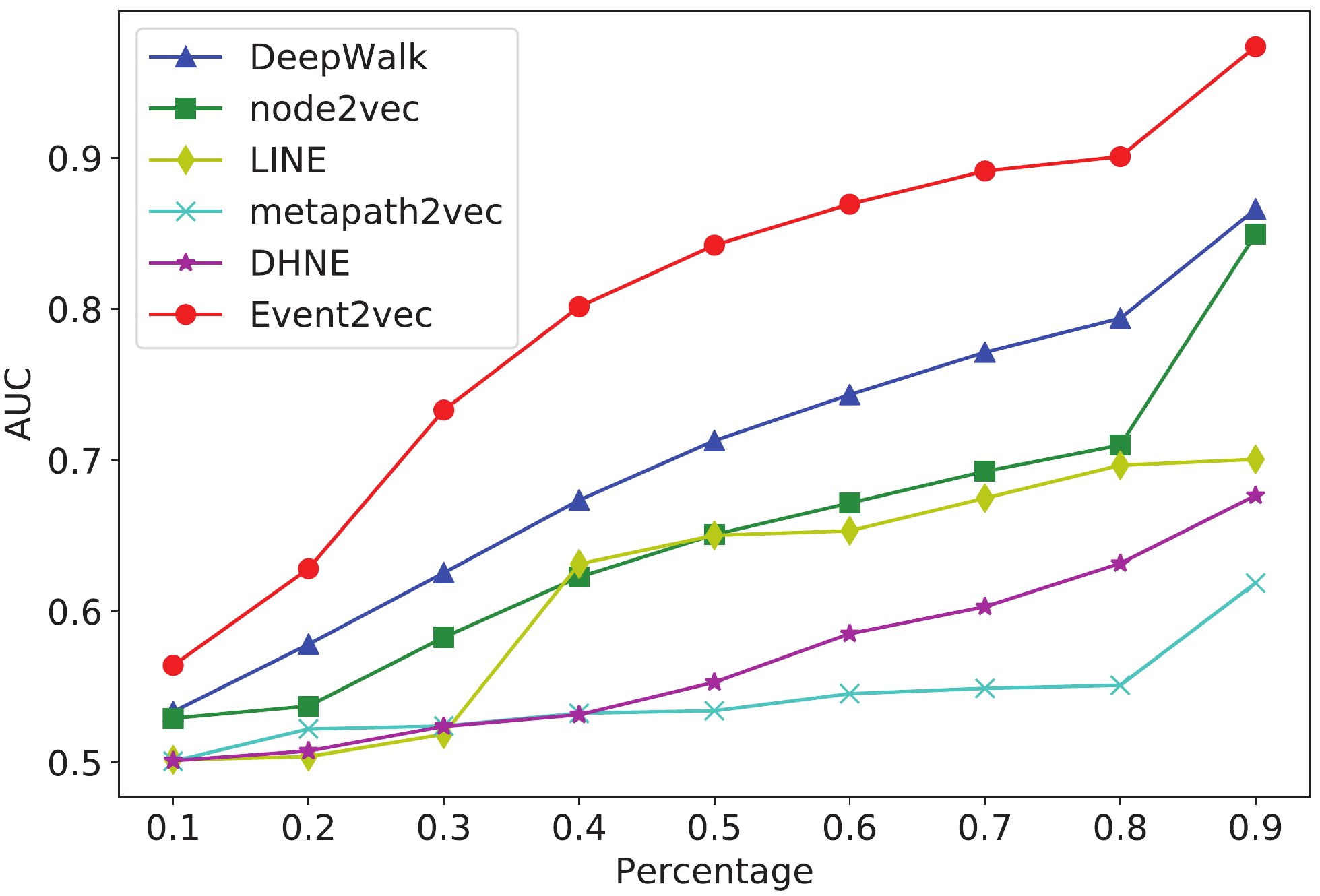}
	\caption{{\small Link prediction results on DBLP of different sparsity.}}\label{figure6}
\end{figure}

\subsection{Node Classification}
Node classification in network analysis is an important task in many applications such as user classification in social networks, movie classification in movie-user networks, and so on. In this section, we conduct the node classification tasks on DBLP and IMDB to evaluate the effectiveness of NRL algorithms on preserving original HIN information. The better node classification performance NRL algorithm obtain, the better effectiveness it has.

The embeddings of objects generated from different methods are used as input features to classify the objects, and the classifier used in our experiments is logistic regression. We randomly sample 10\% to 90\% of the labeled objects as the training set and use the rest labeled objects as the testing set. For DBLP, since each labeled object only receives one label, we use the precision and AUC as the evaluation metrics. For IMDB, each labeled movie has at least one label. Therefore, Micro-f1 and Macro-f1 are used as the evaluation metrics. Each experiment is conducted ten times and the average performance is reported in Figure \ref{figure7}. The standard deviation is less than 0.015 for all experiments.

From the experimental results, we can observe that:

\begin{itemize}
	\item Event2vec performs better than all baselines for node classification task on DBLP and IMDB. It demonstrates that Event2vec is effective to preserve the original structural and semantic information of the original HINs.
	\item Event2vec achieves good performance when there are few labeled objects, even outperforming the performance of some baselines obtained on the cases of vast labeled objects. It demonstrates the robustness of the predictable power of object embeddings obtained by Event2vec.
\end{itemize}

\begin{figure}
	\centering
	\includegraphics[scale=0.28]{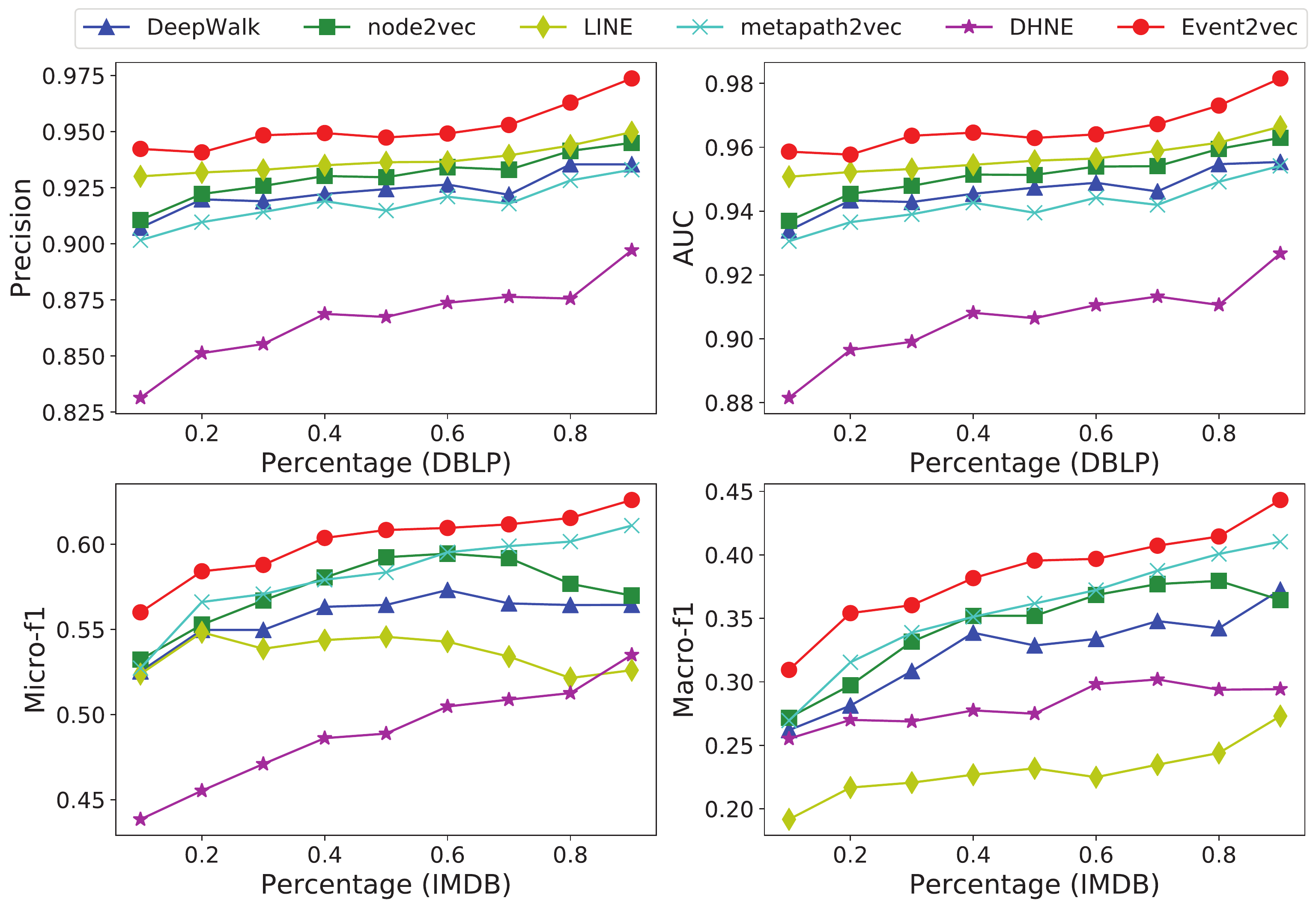}
	\caption{{\small Top: node classification results on DBLP using Precision and AUC as evaluation metrics; bottom: node classification results on IMDB using Micro-f1 and Macro-f1 as evaluation metrics.}}\label{figure7}
\end{figure}

\begin{figure*}
	\centering
	\subfigure[{\small \#dimension $d$}]{
		\includegraphics[scale=0.35]{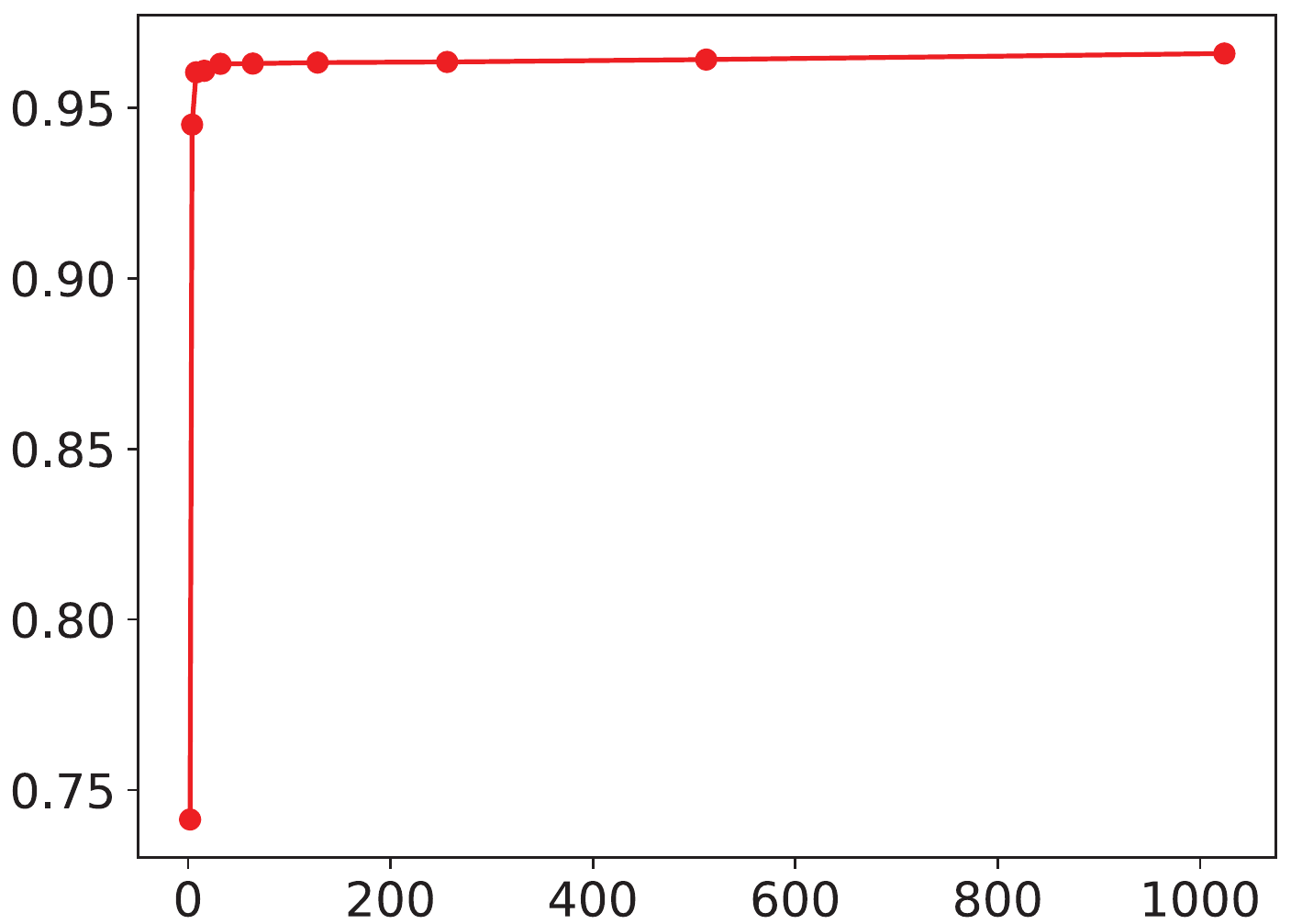}
	}
	\subfigure[{\small penalty rate $\beta$}]{
		\includegraphics[scale=0.35]{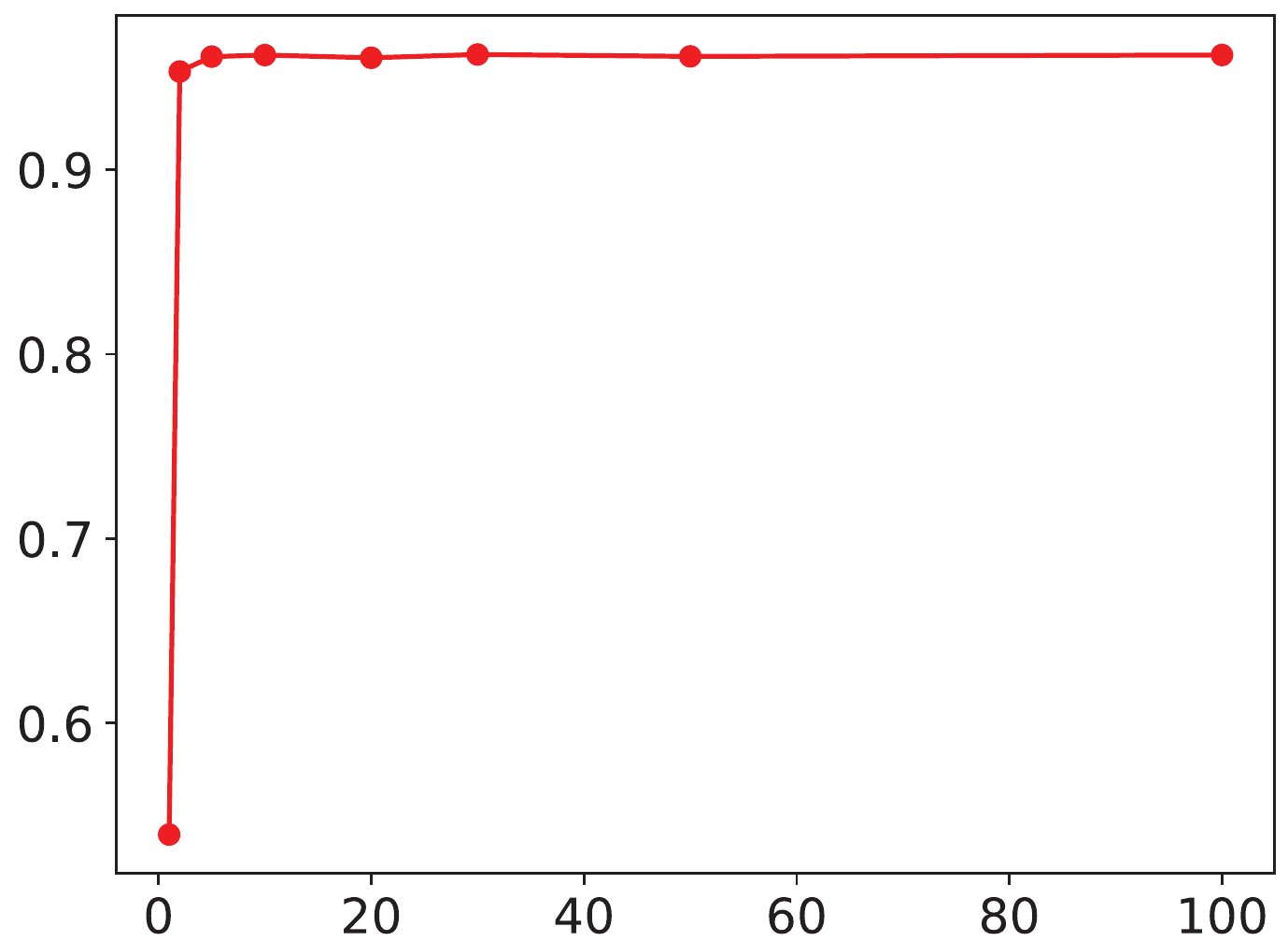}
	}
	\subfigure[{\small \#hidden layers}]{
		\includegraphics[scale=0.35]{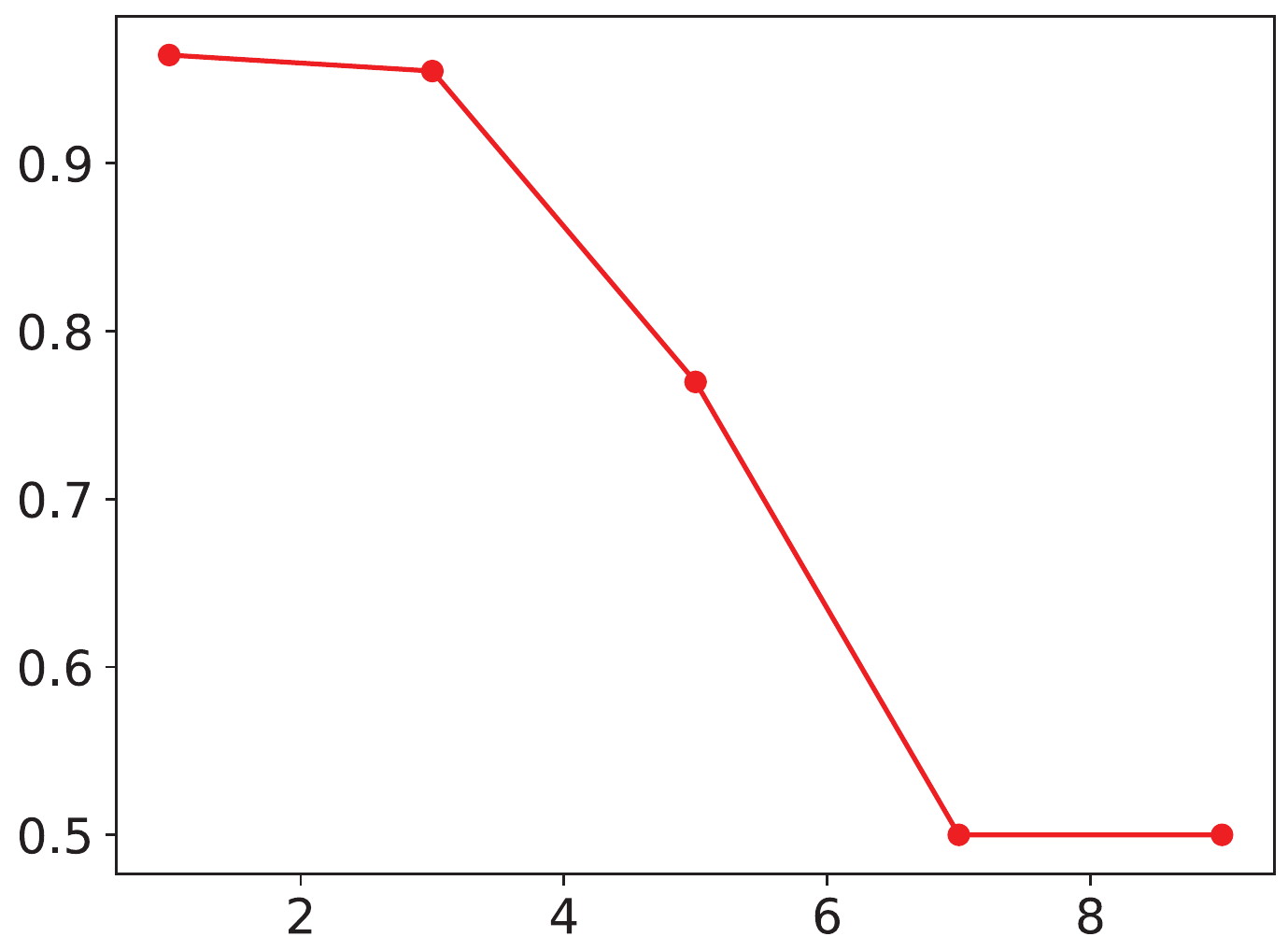}
	}
	\caption{{\small Parameter sensitivity in node classification.}}\label{figure8}
\end{figure*}

\subsection{Parameter Sensitivity Analysis}
We conduct node classification task experiments on DBLP to study the parameter sensitivity in this section. Figure \ref{figure8} shows using 50\% as training set and left as the testing set, the node classification performance (AUC) as a function of one of the chosen parameter when fixing the others.

Figure \ref{figure8}(a) shows that the performance of node classification quickly improves as the number of embedding dimension $d$ increases at the beginning, and become stable when the $d$ is larger than 32. The impact of embedding dimension is reasonable since too small $d$ is inadequate for embodying rich information of HINs. However, when the $d$ is large enough, increasing $d$ will not enrich the network information embodied in the embeddings.

Figure \ref{figure8}(b) shows that when $\beta$ is set as 1, which means the non-zero elements are as important as zero elements of the incident matrix, the performance of node classification is poor. As discussed in section 4, the autoencoder reconstructs the existing link correctly is more important. Therefore, when we set a larger penalty weight to the non-zero elements, the performance of node classification quickly improves and finally becomes stable when the penalty rate is large enough.

With fixing the number of neurons in each layer to 64, we conduct the experiments to study the sensitivity of the number of hidden layers of autoencoder and present the results at Figure \ref{figure8}(c). The results of Figure \ref{figure8}(c) show that the performance of node classification reduces as the number of hidden layers increases and finally as worse as random guessing when the number of hidden layers is larger than 7. Comparing to the single hidden layer autoencoder, deeper autoencoders are easier to over-fit the data. It demonstrates that simple models are sufficient to capture the original structural and semantic information of HINs well in our proposed NRL framework. Furthermore, simple models perform usually more robust to their hyper-parameters and are less prone to over-fitting.

\section{Conclusion}
In this paper, we showed the relevance among multiple objects in HINs should be considered as a whole and such relevance is driven by both the quantities and properties of relations. However, the existing NRL methods consider only the quantities of relations and ignore the impact of their properties. To tackle this issue, we defined the EFP and ESP to measure the object relevance according to the quantities and properties of relations, respectively. A new NRL framework called Event2vec was proposed to learn the object embeddings for HINs, which was theoretically proved that it is able to preserve both the EFP and ESP during the learning process.

To evaluate the performance of the proposed method, we conducted three network analysis tasks on four real-world datasets. The results of network reconstruction showed that Event2vec can effectively preserve the original structural information of HINs. Event2vec significantly outperformed the baselines on link prediction and node classification tasks that demonstrated Event2vec is effective to preserve both structural and semantic information of HINs. DeepWalk and node2vec were shown to work effectively on network reconstruction task as well. However, the results of link prediction and node classification indicated their weakness for preserving the semantic information of HINs. Overall, the experimental results demonstrated the effectiveness of Event2vec for preserving the original structural and semantic information of HINs through preserving the EFP and ESP.

Future work includes exploring more efficient event generating strategies and more powerful models which are used to learn event embeddings in our framework. Since the computational complexity of event generating is $O(|E||\Omega|)$. Designing more efficient event generating strategies can reduce the computational cost and improve the scalability of Event2vec. This paper used the traditional autoencoder to learn event embeddings, while advanced autoencoders such as denoising autoencoders \cite{vincent2010stacked}, variational autoencoders \cite{doersch2016tutorial} or other models may work better for this task. Using more powerful models to learn event embeddings in Event2vec can further improve its performance.

\bibliographystyle{ACM-Reference-Format}
\bibliography{bibliography}

\end{document}